\newcommand*\iftodonotes{\if@todonotes@disabled\expandafter\@secondoftwo\else\expandafter\@firstoftwo\fi}  %
\definecolor{dandelion}{HTML}{FFD464}
\definecolor{bittersweet}{HTML}{C04F17}
\definecolor{mintgreen}{RGB}{152, 255, 152}
\newcommand{\modelname}[1]{{\tt{#1}}}
\crefname{section}{\S}{\S\S}
\Crefname{section}{\S}{\S\S}
\crefname{table}{Tab.}{}
\crefname{figure}{Fig.}{}
\crefname{algorithm}{Algorithm}{}
\crefname{equation}{Eq.}{Eqs.}  %
\crefname{appendix}{App.}{}
\crefname{thm}{Theorem}{Theorems}
\crefname{prop}{Proposition}{Propositions}
\crefname{cor}{Corollary}{Corollaries}
\crefname{observation}{Observation}{Observations}
\crefname{assumption}{Assumption}{Assumptions}
\theoremstyle{definition}
\newtheorem{proposition}{Proposition}[section]
\newcommand{\defn}[1]{{\textbf{#1}}}
\newcommand{\defequals}{\triangleq}
\newcommand{\mtrian}{\mathrel{\raisebox{-0.1ex}{%
\scalebox{0.8}[0.6]{$\vartriangle$}}}}
\newcommand{\defpropto}{\overset{\mtrian}{\propto}}
\newcommand{\R}{\mathbb{R}}
\newcommand{\RD}{\mathbb{R}^D}
\newcommand{\kleene}[1]{{#1^*}}
\newcommand{\STR}{{\kleene{\Sigma}}}
\newcommand{\spearman}{{\rho}}
\newcommand{\pearson}{{r}}
\definecolor{MyTawny}{HTML}{d55e00} %
\definecolor{MyGreen}{HTML}{029e73}
\definecolor{MyBlue}{HTML}{0173b2}
\definecolor{MyOrange}{HTML}{de8f05}
\definecolor{MyBronze}{HTML}{ca9161}
\definecolor{MySilver}{HTML}{949494}
\definecolor{MyKerria}{HTML}{F8B500}
\newcommand{\querytext}[1]{{\color{MyGreen} \textit{#1}}}
\newcommand{\contexttext}[1]{{\color{MyOrange} \textit{#1}}}
\newcommand{\answertext}[1]{{\color{MyTawny} \textit{#1}}}
\newcommand{\realtext}[1]{{\color{MyBronze} \textit{#1}}}
\newcommand{\faketext}[1]{{\color{MySilver} \textit{#1}}}
\newcommand{\opentext}[1]{{\color{MyGreen} \textit{#1}}}
\newcommand{\closedtext}[1]{{\color{MyTawny} \textit{#1}}}
\newcommand{\context}{{\color{MyOrange}{c}}}
\newcommand{\rvContext}{{\color{MyOrange}{C}}}
\newcommand{\rvAnswer}{{\color{MyTawny}{A}}}
\newcommand{\rvQuery}{{\color{MyGreen}{Q}}}
\newcommand{\numcontextsample}{{\color{MyOrange}{n_c}}}
\newcommand{\order}[1]{\ensuremath{\mathcal{O}\left(#1\right)}}
\newcommand{\bigo}[1]{{\order}}
\newcommand{\alphabet}{\Sigma}
\newcommand{\lm}{p_{\textsc{m}}}
\newcommand{\answer}{{\color{MyTawny}a}}
\newcommand{\query}{{\color{MyGreen}{q}}}
\newcommand{\stickyscoresymb}{\chi}
\newcommand{\intrinsicsussymb}{\chi^*}
\newcommand{\MI}{\mathrm{I}}
\newcommand{\fimsymb}{\mathcal{J}}
\newcommand{\lmembed}{\boldsymbol{\theta}}
\newcommand{\deltaembedding}{\boldsymbol{\delta}}
\newcommand{\funcp}{f_{\query}}
\newcommand{\eye}{\mathbf{I}}
\DeclareMathOperator{\mTr}{Tr}
\newcommand{\Tr}[1]{\mTr\mleft(#1\mright)}
\newcommand{\expectation}[2]{\mathop{{}\mathbb{E}}_{#1}\mleft[ #2 \mright]}
\newcommand{\variance}[2]{\mathop{{}\mathbb{V}}_{#1}\mleft[ #2 \mright]}
\newcommand{\variancedelta}{\mathbf{S}}
\DeclarePairedDelimiterX{\infdivx}[2]{(}{)}{#1\;\delimsize\Vert\;#2}
\newcommand{\kldiv}{D_{\mathrm{KL}}\infdivx}
\definecolor{red_fig}{HTML}{D95847}
\definecolor{blue_fig}{HTML}{5D7CE6}
\definecolor{skipcolor}{HTML}{CC78BC}
\definecolor{valuecolor}{HTML}{029E73}
\definecolor{querycolor}{HTML}{DE8F05}
\definecolor{keycolor}{HTML}{0173B2}
\newcommand{\entityexample}[1]{{\color{MyBlue}{\textit{\textbf{#1}}}}}
\title{Efficiently Computing Susceptibility to Context in Language Models}
\author{
Tianyu Liu~\;~Kevin Du~\;~Mrinmaya Sachan~\;~Ryan Cotterell \\
\normalsize 
\{\href{mailto:tianyu.liu@inf.ethz.ch}{\texttt{tianyu.liu}}, \href{mailto:kevidu@inf.ethz.ch}{\texttt{kevin.du}}, \href{mailto:mrinmaya.sachan@inf.ethz.ch}{\texttt{mrinmaya.sachan}}, \href{mailto:ryan.cotterell@inf.ethz.ch}{\texttt{ryan.cotterell}}\}\texttt{@inf.ethz.ch} \\
\setlength{\fboxsep}{2.5pt}%
\setlength{\fboxrule}{2.5pt}%
\fcolorbox{white}{white}{
    \includegraphics[width=.15\linewidth]{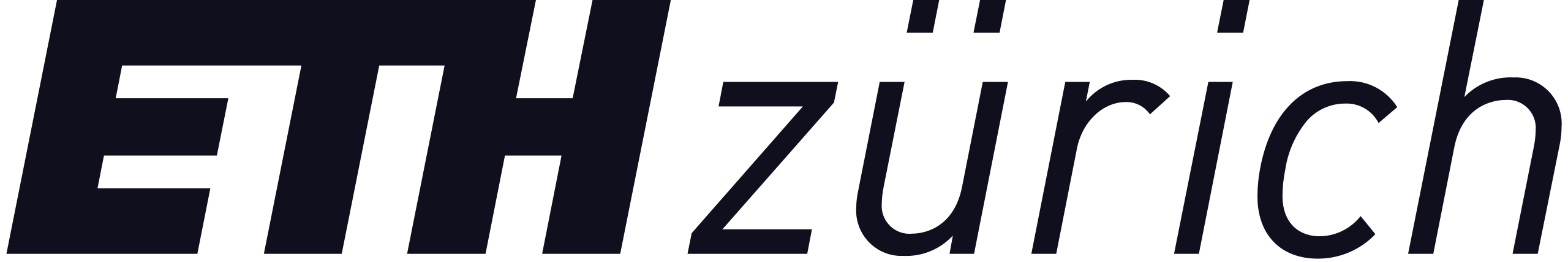}
}
}
\begin{document}
\maketitle
\begin{abstract}
One strength of modern language models is their ability to incorporate information from a user-input context when answering queries. 
However, they are not equally sensitive to the subtle changes to that context.
To quantify this, \citet{du2024context} gives an information-theoretic metric to measure such sensitivity. 
Their metric, susceptibility, is defined as the degree to which contexts can influence a model's response to a query at a distributional level.
However, exactly computing susceptibility is difficult and, thus, \citet{du2024context} falls back on a Monte Carlo approximation.
Due to the large number of samples required, the Monte Carlo approximation is inefficient in practice. 
As a faster alternative, we propose Fisher susceptibility, an efficient method to estimate the susceptibility based on Fisher information.
Empirically, we validate that Fisher susceptibility is comparable to Monte Carlo estimated susceptibility across a diverse set of query domains despite its being $70\times$ faster.
Exploiting the improved efficiency, we apply Fisher susceptibility to analyze factors affecting the susceptibility of language models.
We observe that larger models are as susceptible as smaller ones.\footnote{Our code is available at \url{https://github.com/lyutyuh/susceptibility}.}
\end{abstract}

\section{Introduction}
Much of current language models' (LM) capabilities and success are due to their responsiveness to different user-input contexts, e.g., prompts, and ability to integrate those contexts with prior knowledge \cite[][\emph{inter alia}]{NEURIPS2020_1457c0d6,bubeck2023sparks}.
Given this ability, it is natural for us to wonder how easily a model's prior belief can be changed by an input context.
For example, a language model might correctly complete the prompt \querytext{Here Comes the Sun is performed by} with \answertext{The Beatles}, but wrongly when this prompt is prepended with a random context such as \contexttext{Falafel wraps make ostriches burp.}; such a context might distract the model away from its original behavior.

Recently, \citet{du2024context} studied how easily input contexts can skew language models to give answers to queries that are different than the answers stored in their prior knowledge from an information-theoretic point of view.
They propose a metric, termed \textit{susceptibility}, to quantify the discrepancy between a model's prior belief and updated belief after seeing the input contexts as the mutual information between input contexts and answers from a specific language model. 
This metric makes use of the language model's \emph{distribution} over answers rather than relying on an approximate argmax, as is common in previously proposed metrics \citep{shi-etal-2023-irrelevant,wang2024resilience}.
Susceptibility, as an information-theoretic metric, captures changes in model behavior that might not necessarily surface while decoding an answer.

Computing susceptibility requires, in principle, summing over all possible answers, a countably infinite set.
In the general case, the authors know of no efficient algorithm to perform such a summation; \citet{du2024context} propose a Monte Carlo approximation. 
Such a sampling approximation, however, is computationally expensive: to compute the susceptibility for one query, one needs to execute a forward pass of the neural language model for each context in the sample set, making it inefficient to be applied to large-scale datasets.
For example, \citet{du2024context} considers sampling 600 contexts for every individual query and thus requires 600 forward passes.
In light of the computation required for the Monte Carlo approximation to susceptibility, we propose a more efficient approximation based on Fisher information that does not require sampling to estimate the susceptibility; we term this approximation \textit{Fisher susceptibility}.\looseness=-1

We conduct experiments across queries from 122 relation domains (e.g., \querytext{alumniOf} or \querytext{capitalOf}) in the YAGO knowledge graph \cite{yago2007} with a variety of model sizes (e.g., from 70 million to 8 billion parameters), language model families (e.g., \modelname{Pythia} \citep{biderman2023pythia}, \modelname{LLaMA} \citep{touvron2023llama}, and \modelname{GPT-2} \citep{NEURIPS2020_1457c0d6}), and fine-tuning schemes (e.g., instruction-tuning).
First, to empirically validate Fisher susceptibility, we show that it is tightly correlated with \citeposs{du2024context} Monte Carlo approximation to susceptibility across domains while also benchmarking its speed.
In these experiments, we find that compared to the Monte Carlo estimation with a sample size of 256, Fisher susceptibility exhibits a $70\times$ improvement in runtime.
Then, we use the increased efficiency of Fisher susceptibility to investigate further initial questions like the one posed in the first paragraph: how susceptible are language models across different sizes and model families, and what factors influence a model's Fisher susceptibility for a query?
We find that larger models are not less susceptible than smaller ones and that instruction-tuning does not help reduce susceptibility.
We further find that queries about well-known entities are equally susceptible as less frequent ones under Fisher susceptibility, which contrasts with the finding from \citet{du2024context} that susceptibility is negatively related with expected familiarity when contexts might be relevant. 
To the extent that this is not an approximation error, this finding suggests that, regardless of how much prior knowledge it has about a query, a language model is still susceptible to contexts and can integrate new information from them.\looseness=-1

\section{Susceptibility to Context in LMs}
Language models are capable of answering a wide range of queries formulated in natural language, including code auto-completion, text generation, and factual question answering \cite[][\textit{inter alia}]{kwiatkowski_natural_2019,NEURIPS2020_1457c0d6,kasai_realtime_2022}.
When responding to a query, language models need to synthesize the prior knowledge they learned during pretraining with the new information provided in the input context \citep{kwiatkowski_natural_2019, joshi_triviaqa_2017, berant_semantic_2013, kasai_realtime_2022}.
For example, in the knowledge conflict setting proposed by \citet{longpre-etal-2021-entity} given the query \querytext{What's the capital of Ireland?} and context \contexttext{The capital of Ireland is Rome.}, the model must decide between whether to agree with its prior knowledge (\answertext{Dublin}) or the context (\answertext{Rome}).

How easily large language models are affected by contexts is a well-studied problem \cite[][\emph{inter alia}]{liang2023holistic,yoran2024making,wang2024resilience,wu2024easily,wu2024instructing}. 
Many studies, including \citet{longpre-etal-2021-entity, chen-etal-2022-rich, xie_adaptive_2023}, measure how easily context influences a model by computing the \defn{memorization ratio}: the proportion of examples for which the model maintains from before the context was introduced.
However, memorization rate may not fully capture the strength of this influence.
For example, adding a context could take a model's probability of answering a token from 95\% to 51\%, but the 1-best answer would remain the same, and the context's influence would not be detectable by memorization rate.
To solve this issue, \citet{du2024context} provides a metric that measures the influence of context on a model's answer to a query using a more fine-grained metric based on the model's full answer distribution, the \defn{susceptibility}, which we describe in detail in \Cref{sec:measure_sus}.
To compute susceptibility, \citet{du2024context} proposes a Monte Carlo approximation.
However, it requires running one forward pass, which is computationally expensive, per sampled answer; this limits the scale of analysis.
Thus, we aim to find a method that is more efficient, general, and interpretable.\looseness=-1

\section{Measuring Susceptibility}
\label{sec:measure_sus}
Let $\alphabet$ be an \defn{alphabet}, i.e., a finite, non-empty set.
A \defn{language model} $\lm$ over an alphabet $\alphabet$ is a distribution over $\STR$, the set of all strings with \defn{tokens} drawn from $\alphabet$.
We denote a query by $\query \in \STR$, and the answer to the query by $\answer \in \STR$.
When querying a language model, a context $\context \in \STR$ is provided together with the query $\query$ in one input sequence $\context \oplus \query$ where $\oplus$ denotes string concatenation. 
In practice, querying a language model is the process of generating an answer $\answer$ from the distribution $\lm(\cdot \mid \context \oplus \query)$.\looseness=-1

\subsection{Susceptibility as Mutual Information}
For a pretrained language model, \citet{du2024context} gives a metric that quantifies how easily a language model's distribution given a query is altered by a context.
To investigate how the answer distributions of a language model respond to a class of contexts, they consider three $\STR$-valued random variables, $\rvContext$, $\rvQuery$ and $\rvAnswer$, standing for context, query, and answer.
For a specific query $\query$, $\rvContext$ and $\rvAnswer$ are jointly distributed according to the distribution
\begin{align}\label{eq:joint}
   p(\rvContext = \context, \rvAnswer = \answer \mid \rvQuery = \query) \defpropto \lm( \answer \mid \context \oplus \query).
\end{align} 
Based on the joint distribution in \Cref{eq:joint}, \citet{du2024context} defines the
\defn{susceptibility} of a query $\query$ as the conditional mutual information between $\rvContext$ and $\rvAnswer$:\looseness=-1
\begin{align}
& \stickyscoresymb(\query) 
    \defequals \MI(\rvContext; \rvAnswer \mid \rvQuery = \query) \label{eq:susceptibility} \\
    &=\expectation{\rvContext}{\kldiv{p(\rvAnswer \mid \rvContext = \context, \rvQuery = \query)}{p(\rvAnswer \mid \rvQuery = \query)} }  \nonumber
\end{align}
See \citet[\S 3.2,][]{du2024context} for additional details. 

Intuitively, the susceptibility measures how much the answer distribution differs before and after being prompted with additional contexts \emph{on average}.
By taking the expectation over all possible contexts, we arrive at the susceptibility. 
If the model is not susceptible, the Kullback--Leibler divergence $\kldiv{p(\rvAnswer \mid \rvContext, \rvQuery = \query)}{p(\rvAnswer \mid \rvQuery = \query)}$ should be $0$ in expectation, meaning that $p(\rvAnswer \mid \rvContext, \query)$ and $p(\rvAnswer \mid \rvQuery = \query)$ are the same, regardless of the value of $\rvContext$.\looseness=-1

\subsection{Computational Cost} \label{sec:mi-problem}
\citet{du2024context} gives a practical algorithm to estimate susceptibility based on Monte Carlo estimation.
We call susceptibility estimated in this manner \defn{Monte Carlo susceptibility}.
However, computing Monte Carlo susceptibility requires sampling a large number of input--output pairs from a language model.
Indeed, we first sample contexts from a distribution $p(\rvContext)$, and then evaluate the conditional distribution over answer $p(\rvAnswer \mid \rvContext = \context, \rvQuery = \query)$ for all answers $\answer$ in the answer space, which requires one forward pass per sample.
Because the answer space is typically all of $\STR$, $p(\rvAnswer \mid \rvContext = \context, \rvQuery = \query)$ is additionally approximated by considering the next-token distribution \cite{du2024context}. 
Thus, the runtime of such a scheme, with the next-token approximation, is still $\mathcal{O}\left(\numcontextsample \times |\alphabet|\right)$ where $\numcontextsample$ is the number of context samples; for reference, \citet{du2024context} takes $600$ Monte Carlo samples.\looseness=-1

\section{Fisher Susceptibility}
\label{sec:formalization}
To alleviate the cost of the Monte Carlo approximation discussed in \Cref{sec:mi-problem}, we derive an efficient approximation based on Fisher information.\looseness=-1

\begin{figure*}
    \centering
    \includegraphics[width=0.98\textwidth]{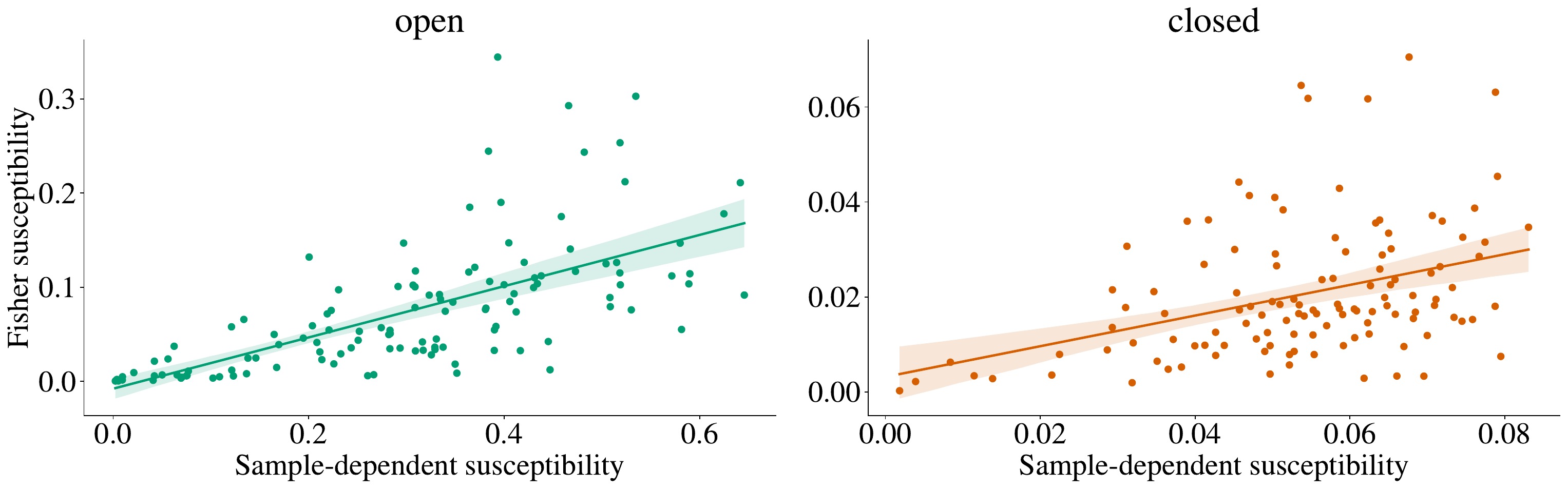}
    \caption{This plot shows the average susceptibility on 122 relation domains (e.g., \querytext{alumniOf}) in the YAGO dataset. Each point represents the average score of queries on a relation domain. The $x$-coordinate represents Monte Carlo susceptibility and $y$-coordinate represents Fisher susceptibility. The scores are computed with \modelname{LLaMA-3-8B-Instruct}. For both query types, the two metrics are strongly correlated.}
    \label{fig:sus-vs-fisher-corpus}
\end{figure*}

\subsection{A Simple Reparameterization} \label{sec:reparameterization}

First, we give a simple reparameterization that applies to any neural $\lm$.
Given a query $\query$, each context $\context \in \STR$ defines a probability distribution over $\STR$, viewed as answers to the query $\query$.
Now, let $\lmembed \colon \STR \to \RD$ be an injective embedding function that maps strings to unique real vectors. 
We can view $\lmembed$ as an \emph{index} and, thus, consider the language model as a parameterized family of distributions with a real-valued index:
\begin{equation}
\!\!\lm(\rvAnswer = \answer \mid \rvContext = \context, \rvQuery=  \query) \defequals \funcp(\answer; \lmembed(\context \oplus \query)),
\end{equation}
In this view, the conditional language model is parameterized by a real-vector rather than by a string that encodes the context.
In our experiments, we define the embedding function to map its argument $\context \oplus \query$ to be the concatenation of the real-valued embedding vectors of $\query$ and the $\context$ given by the pretrained language model. %

\subsection{The Fisher Information Matrix}
The reparameterization given in \Cref{sec:reparameterization} opens up a new type of approximation.
Specifically, we can now define the following \defn{Fisher information}
\begin{align} \label{eq:fisher-information}
    \fimsymb & \mleft( \lmembed(\query) \mright) \defequals \\ 
    & \expectation{\answer \sim \funcp(\cdot; \lmembed(\query)) }{ \frac{\partial\log \funcp(\answer; \lmembed(\query)) }{ \partial \lmembed } \frac{\partial\log \funcp(\answer; \lmembed(\query)) }{ \partial \lmembed^\top} }\nonumber
\end{align}
as a measure of how much influence a context has on the distribution over answers. 
The Fisher information matrix can be interpreted as a quantification of the amount of information that an observable random variable carries about an unknown parameter \cite[][]{LehmCase98}: 
The higher the Fisher information is, the easier we can estimate the unknown parameters of the distribution from samples.
In our context, we are interested in how language models react to the change in the \emph{context} rather than the pretrained parameters of the model.
Thus, we treat the parameters of the language model itself as constants and investigate the Fisher information of $\funcp$ with respect to the embedding vector $\lmembed(\query)$ of query $\query$.

More relevant to susceptibility, there is a formal relationship between the Fisher information matrix and the KL divergence.
Performing a second-order Taylor expansion of the KL divergence, we arrive at\looseness=-1
\begin{equation}
\begin{aligned}
     & \kldiv{\funcp(\cdot \mid \lmembed(\query) + \deltaembedding)}{\funcp(\cdot \mid \lmembed(\query))}  \\
    & \qquad \qquad = \,\, \frac{1}{2} \deltaembedding^\top \fimsymb \mleft( \lmembed(\query)  \mright) \deltaembedding + \order{||\deltaembedding||^3}, \label{eq:fisher-quadratic-form}
\end{aligned}
\end{equation}
where $\deltaembedding \in \mathbb{R}^D$ is a perturbation of the distribution parameter. 
When the perturbation is small, i.e., when $|| \deltaembedding ||^3$ is small, we expect \Cref{eq:fisher-quadratic-form} to be dominated by its first term.
This view invites a simple approximation of the KL divergence. \looseness=-1

\subsection{Fisher and Mutual Information} \label{sec:susceptibility_score}

In \cref{eq:susceptibility}, the susceptibility $\stickyscoresymb(\query)$ is defined as the mutual information between the context and answer random variables, conditioned on a fixed query about an entity.
Using the identity given in \cref{eq:fisher-quadratic-form}, the susceptibility can be rewritten as
\begin{subequations}\label{eq:sus-fi}   
\begin{align}
\stickyscoresymb(  \query)  & \defequals \MI(\rvContext; \rvAnswer \mid \rvQuery = \query)  \\ 
 & = \frac{1}{2} \expectation{\rvContext  }{\deltaembedding(\context, \query)^\top  \fimsymb( \lmembed(\query)) \deltaembedding(\context, \query)}\\
 & \qquad\qquad\qquad + \order{\expectation{\rvContext}{||\deltaembedding(\context, \query)||^3}} \nonumber \\
 &\approx \frac{1}{2} \expectation{\rvContext  }{\deltaembedding(\context, \query)^\top  \fimsymb( \lmembed(\query)) \deltaembedding(\context, \query)}, \label{eq:final-approx}
\end{align}
\end{subequations}
where we define the context-specific perturbation
\begin{equation}
    \deltaembedding(\context, \query) \defequals \lmembed(\context \oplus \query) - \lmembed(\query).
\end{equation}
The full derivation is given in \cref{app:derivation-sus-fi}.

\begin{table*}
    \renewcommand{\arraystretch}{1}
    \centering
    \small
    \resizebox{0.9\textwidth}{!}{%
    \begin{tabular}{lcccccc}
        \toprule
         & \multicolumn{2}{c}{Open} & \multicolumn{2}{c}{Closed} & \multicolumn{2}{c}{Overall}  \\ \cmidrule(lr){2-3} \cmidrule(lr){4-5} \cmidrule(lr){6-7} 
        Model & Pearson & Spearman & Pearson & Spearman & Pearson & Spearman  \\ \midrule
        \modelname{gpt2}   & 0.55   & 0.58   & 0.21   & 0.18   & 0.63   & 0.56    \\
\modelname{gpt2-large}   & 0.27   & 0.37   & 0.24   & 0.32   & 0.43   & 0.64    \\
\modelname{gpt2-medium}   & 0.42   & 0.47   & 0.29   & 0.28   & 0.57   & 0.68    \\
\modelname{gpt2-xl}   & 0.38   & 0.50   & 0.31   & 0.36   & 0.54   & 0.71    \\ \midrule
\modelname{LLaMA-3-8B}   & 0.46   & 0.66   & 0.31   & 0.35   & 0.47   & 0.65    \\
\modelname{LLaMA-3-8B-Instruct}   & 0.51   & 0.76   & 0.38   & 0.47   & 0.53   & 0.68    \\
\modelname{LLaMA-2-7B}   & 0.46   & 0.71   & 0.25   & 0.27   & 0.52   & 0.69    \\
\modelname{LLaMA-2-7B-chat}   & 0.55   & 0.78   & 0.40   & 0.47   & 0.47   & 0.65    \\ \midrule
\modelname{Pythia-70m-deduped}   & 0.66   & 0.68   & 0.32   & 0.39   & 0.64   & 0.57    \\
\modelname{Pythia-160m-deduped}   & 0.44   & 0.53   & 0.26   & 0.32   & 0.51   & 0.63    \\
\modelname{Pythia-410m-deduped}   & 0.59   & 0.64   & 0.36   & 0.35   & 0.70   & 0.77    \\
\modelname{Pythia-1.4b-deduped}   & 0.45   & 0.55   & 0.20   & 0.19   & 0.52   & 0.69    \\
\modelname{Pythia-2.8b-deduped}   & 0.37   & 0.43   & 0.25   & 0.29   & 0.55   & 0.69    \\
\modelname{Pythia-6.9b-deduped}   & 0.34   & 0.42   & 0.26   & 0.28   & 0.56   & 0.72    \\ \bottomrule
    \end{tabular}
    }
    \caption{Correlations between Monte Carlo susceptibility and Fisher susceptibility across different models. For each model, we compute Monte Carlo susceptibility and Fisher susceptibility. Then, we evaluate the Pearson's and Spearman's correlation between them on open, closed, and all queries from YAGO.} 
    \label{tab:fisher-sus-comparison}
\end{table*}

To derive an approximation to susceptibility that does not require sampling, we need to remove the terms in \cref{eq:sus-fi} that depend on the random variable $\rvContext$.
If we assume that
\begin{subequations}
\begin{align} \label{eq:mean-and-variance-1}
    \expectation{\rvContext}{\deltaembedding(\context, \query)} &= \mathbf{m} \\
    \variance{\rvContext}{\deltaembedding(\context, \query)} &= \variancedelta,  \label{eq:mean-and-variance-2}
\end{align}
\end{subequations} 
we arrive at the following closed-form solution
\begin{equation} \label{eq:quadratic-expectation}
\begin{aligned}
 &\expectation{\rvContext}{\deltaembedding(\context, \query)^\top  \fimsymb( \lmembed(\query)) \deltaembedding(\context, \query)} \\
 &\quad = \Tr{\variancedelta\fimsymb( \lmembed(\query))} + \mathbf{m}^\top \fimsymb( \lmembed(\query)) \mathbf{m}
\end{aligned}
\end{equation}
by means of a well-known identity \citep{Petersen2008}.
If we take, for instance, $\mathbf{m}= \mathbf{0}$ and $\variancedelta = \eye$, \cref{eq:quadratic-expectation} simplifies to 
\begin{align} \label{eq:intrinsic-as-trace}
    \intrinsicsussymb(\query)
    \defequals  \Tr{\fimsymb(\lmembed(\query))},
\end{align}
which we term \defn{Fisher susceptibility}.

To the extent that \Cref{eq:intrinsic-as-trace} is a good approximation of \Cref{eq:sus-fi}, Fisher susceptibility $ \intrinsicsussymb(\query)$ should strongly correlate with \citeposs{du2024context} Monte Carlo approximation to $\stickyscoresymb(\query)$.
We remark again that Fisher susceptibility $\intrinsicsussymb(\query)$ does \emph{not} require computation of the distribution over contexts because, in \cref{eq:mean-and-variance-1,eq:mean-and-variance-2}, we made an assumption about the mean and variance of the perturbation vector $\deltaembedding$.
However, this additional assumption is not theoretically motivated, and, thus, we appeal to experimentation to vet the approximation.\looseness=-1

\subsection{Efficient Computation}
Recall that the Fisher information $\fimsymb(\lmembed(\query))$ is a matrix of shape $\R^{D \times D}$, where $D$ is the dimension of input embedding $\lmembed(\query)$.
Directly computing Fisher information matrix  $\fimsymb(\lmembed(\query))$ using \cref{eq:fisher-information} takes $\mathcal{O}(D^2)$; it has $D^2 \approx 10^{12}$ entries that need to be computed. 
Thus, we apply the following approximation \cite{du2024context} by truncating the distribution over answers: \looseness=-1
\begin{align}
     \fimsymb(\lmembed(\query)) &=\sum_{k=1}^{K} \funcp(\answer_k; \lmembed(\query))\label{eq:fisher-approximation}  \\ 
     & \,\,\,\quad \frac{\partial\log \funcp(\answer_k; \lmembed(\query)) }{ \partial \lmembed} \frac{\partial\log \funcp(\answer_k; \lmembed(\query)) }{  \partial \lmembed^\top}, \nonumber 
\end{align}
where $\{\answer_k\}_{k=1}^K$ are the top-$K$ highest-probability tokens.
This approximation reduces the number of entries to be computed from $\mathcal{O}(D^2)$ to $\mathcal{O}(KD$).
The individual gradients $\frac{\partial\log \funcp(\answer_k; \lmembed) }{ \partial \lmembed }$ can be efficiently computed with auto-differentiation in the same time complexity that it takes to compute a \emph{single} forward pass of the language model.

\section{Experiments}
Experimentally, we first aim to show that Fisher susceptibility correlates well with Monte Carlo susceptibility.
Then, we investigate what factors influence higher susceptibility, across models, queries, and entities, accelerated by the use of Fisher susceptibility.
Finally, we apply Fisher susceptibility to evaluate language models' susceptibility to contexts.\looseness=-1

\subsection{Experiment Setup}
\label{sec:exp_setup}

For all of our experiments, we use the same framework and dataset provided by \citet{du2024context}.\footnote{\url{https://github.com/kdu4108/measureLM}}
For each of the relations from the YAGO knowledge graph \citep{yago2007}, we have two closed query forms (yes--no questions) and two open query forms.
For each relation, we subsample 100 entities, half of which are real entities extracted from YAGO, and half of which are fake entity names generated by GPT-4, from their dataset of 1000 entities.
In total, we construct a collection of 48,800 queries.
For each of these queries, we compute Monte Carlo susceptibility as a point of comparison.
We take a sample of 256 contexts, 8 of which directly mention the queried entity, per query for \modelname{Pythia} and \modelname{GPT-2} models, and 128 contexts, 4 of which directly mention the queried entity, per query for \modelname{LLaMA} models.

We also compute Fisher susceptibility for each of these entities according to \Cref{eq:intrinsic-as-trace}.
We repeat these for models of different families (i.e., \modelname{Pythia} \cite{biderman2023pythia}, GPT-2 \cite{NEURIPS2020_1457c0d6}, and \modelname{LLaMA} \cite{touvron2023llama}), model sizes,\footnote{We choose 70m, 410m, 1.4b, 2.8b, 6.9b for \modelname{Pythia-deduped}, and small, medium, large, xl for \modelname{GPT-2}.} and training types (e.g., pretrained vs instruction-tuned) when applicable.
A full list of models can be found in \Cref{tab:fisher-sus-comparison}

\paragraph{Dataset.}

Following \citet{du2024context}, we use 122 relations from the YAGO knowledge graph \cite{yago2007}, such as \querytext{birthPlace}, \querytext{leader}, and \querytext{homeLocation}. 
For each relation, we sample $50$ real entities from YAGO and $50$ fake entities generated by GPT-4 \citep{openai_gpt4_2023}.\footnote{\texttt{gpt-4-1106-preview}, January 2024.}
Our open and closed queries are constructed from templates of both question-answering and sentence-completion forms, e.g., (closed, question-answering) \querytext{Q: Is \entityexample{\{answer\}} the capital of \entityexample{\{entity\}}? A:}, (open, question-answering) \querytext{Q: What is the capital of \entityexample{\{entity\}}? A:}, and (open, sentence-completion) \querytext{The capital of \entityexample{\{entity\}} is}.
We parameterize the templates with entities (both real and fake entities) and answers in their respective slots in the templates.
For instantiating contexts, we use the base template from \citet{du2024context}, e.g., \contexttext{The capital of \{entity\} is \{answer\}.}.
We parameterize these context templates with both real and fake entities (and answers to the queries, when applicable).
For each relation domain, we randomly sample $2000$ contexts.
For the Monte Carlo estimate of susceptibility, we further subsample 256 contexts for \modelname{Pythia} and \modelname{GPT-2} and 128 contexts for \modelname{LLaMA}.\looseness=-1

\paragraph{Models.}
In our experiments, we use the pretrained language models (\modelname{Pythia}, \modelname{GPT-2}, \modelname{LLaMA}) from Huggingface.\footnote{We use \modelname{Pythia} (\url{https://huggingface.co/collections/EleutherAI/pythia-scaling-suite-64fb5dfa8c21ebb3db7ad2e1}), \modelname{GPT-2} (\url{https://huggingface.co/openai-community/gpt2}), \modelname{LLaMA-2} (\url{https://huggingface.co/collections/meta-llama/llama-2-family-661da1f90a9d678b6f55773b}), and \modelname{LLaMA-3} (\url{https://huggingface.co/collections/meta-llama/meta-llama-3-66214712577ca38149ebb2b6}).}

\begin{figure}[!t]
    \centering
    \includegraphics[width=\columnwidth]{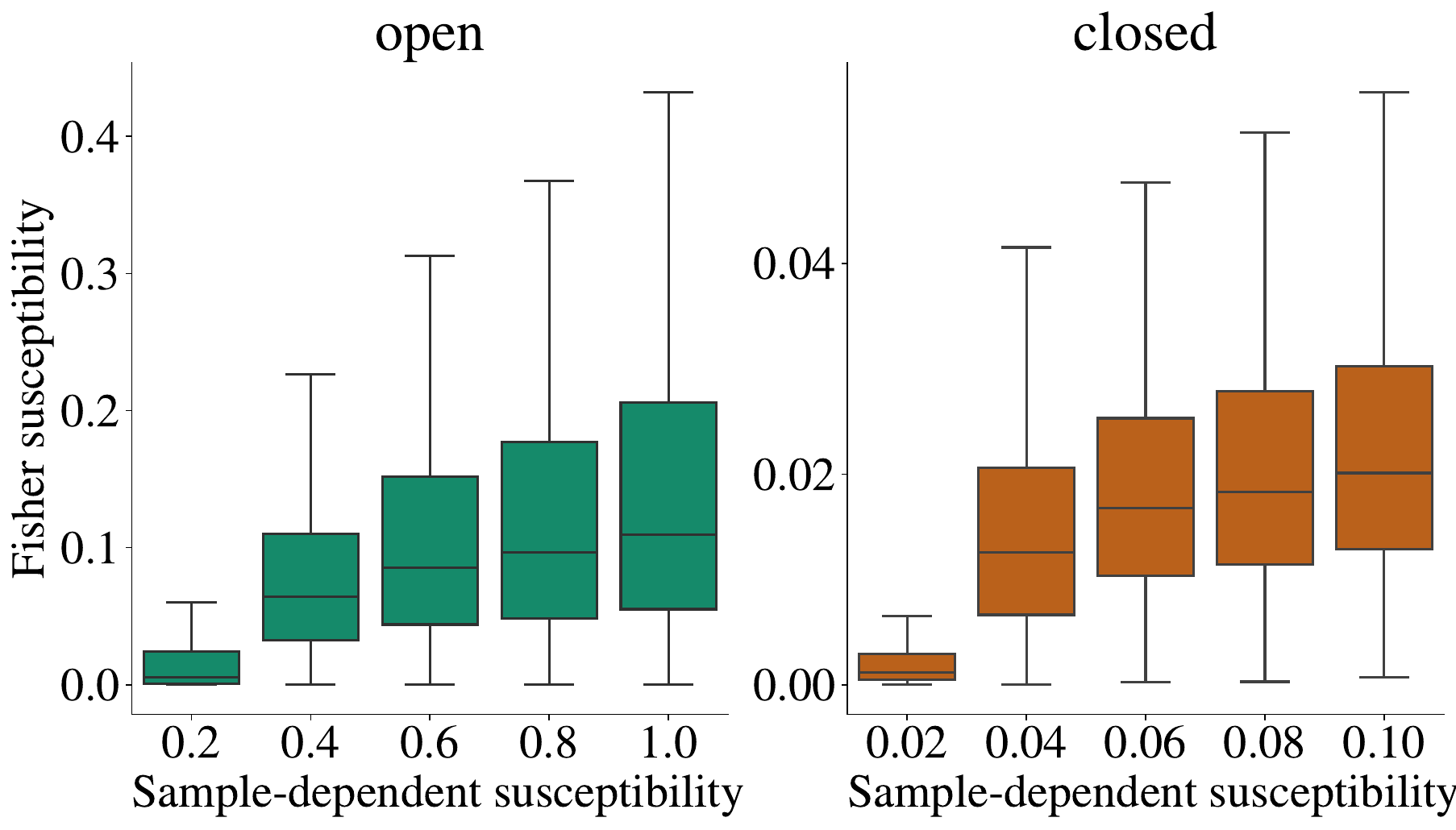}
    \caption{For both \opentext{open} and \closedtext{closed} queries, Fisher susceptibility $y$ strongly correlates with Monte Carlo susceptibility ($x$-axis). 
    Monte Carlo susceptibility is divided into 5 bins from $0$ to $1$ on \modelname{LLaMA-3-8B-Instruct}.}
    \label{fig:open-vs-closed-llama-3-instruct}
\end{figure}

\paragraph{Computational Resources.}
All experiments presented were conducted on two NVIDIA GeForce RTX 4090 GPUs with 24GB memory.
\modelname{LLaMA} models are stored and run in bfloat16 precision.
The other models are stored and run in float32 precision.\looseness=-1

\subsection{Comparing Susceptibility} \label{sec:corr_intrinsic}
We now demonstrate that Fisher susceptibility is a good approximation to susceptibility.\looseness=-1

\begin{table}
    \centering
    \resizebox{0.95\columnwidth}{!}{%
    \begin{tabular}{lccc}
        \toprule
         & \multicolumn{3}{c}{Query Type}  \\ \cmidrule(lr){2-4}
        Model & Overall & Closed & Open \\ \midrule
        \modelname{Pythia-70m-deduped}  & 0.30 & 0.03 & 0.54 \\
        \modelname{Pythia-160m-deduped} & 0.31 & 0.03 & 0.57 \\
        \modelname{Pythia-410m-deduped} & 0.31 & 0.04 & 0.56 \\
        \modelname{Pythia-1.4b-deduped} & 0.35 & 0.06 & 0.62 \\
        \modelname{Pythia-2.8b-deduped} & 0.33 & 0.03 & 0.58 \\
        \modelname{Pythia-6.9b-deduped} & 0.34 & 0.04 & 0.61 \\ \midrule
        \modelname{gpt2-small} & 0.34  & 0.08  & 0.57  \\
        \modelname{gpt2-medium} & 0.31  & 0.05  & 0.55  \\
        \modelname{gpt2-large} & 0.32  & 0.04  & 0.57  \\
        \modelname{gpt2-xl} & 0.29  & 0.04 & 0.51  \\ \midrule
        \modelname{LLaMA-2-7B}          & 0.34 & 0.07 & 0.59 \\
        \modelname{LLaMA-2-7B-chat}     & 0.57 & 0.23 & 0.61 \\
        \modelname{LLaMA-3-8B}          & 0.37 & 0.13 & 0.58 \\
        \modelname{LLaMA-3-8B-instruct} & 0.39 & 0.10 & 0.66 \\ \bottomrule
    \end{tabular}
    }
    \caption{Mean Monte Carlo susceptibility of different models on YAGO on different query types (open, closed, and overall).} \label{tab:model-sus-comparison}
\end{table}

\paragraph{Experiment Setup.}
We use the setup from \Cref{sec:exp_setup} to compute Fisher susceptibility.
For each query, we compute the Pearson's correlation and Spearman's correlation between Monte Carlo susceptibility and Fisher susceptibility.\looseness=-1

\begin{figure*}[!ht]
    \centering
    \includegraphics[width=\textwidth]{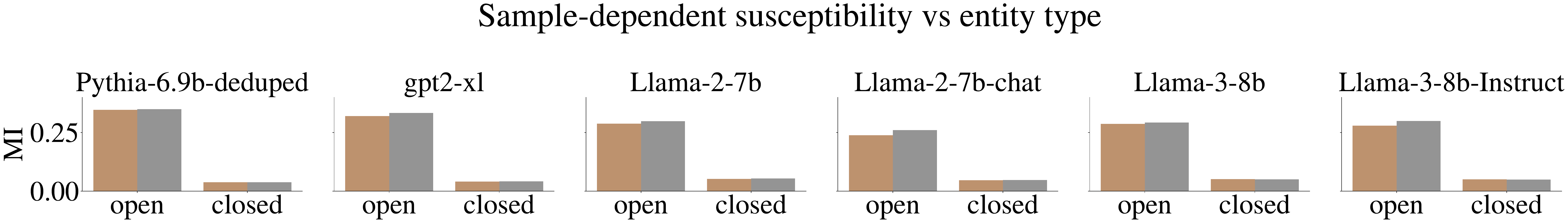}
    \includegraphics[width=\textwidth]{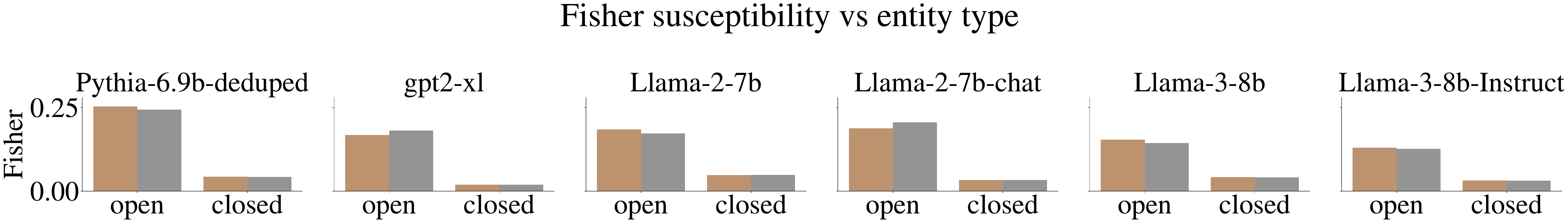}
    \caption{Susceptibility comparisons between open and closed queries for \realtext{real entities} (bars on the left) and \faketext{fake entities} (bars on the right). In each subplot, the two bars on the left represent open queries, and the two on the right represent closed queries. From this, we can see the susceptibility generally does not appear to differ much between real and fake entities. (Top) Monte Carlo susceptibility. (Bottom) Fisher susceptibility.}
    \label{fig:real-vs-fake}
\end{figure*}

\paragraph{Results.}
We compare Monte Carlo susceptibility and Fisher susceptibility. 
In \cref{fig:open-vs-closed-llama-3-instruct,tab:fisher-sus-comparison}, we observe a strong correlation between Fisher susceptibility and Monte Carlo susceptibility across all models.
Across queries, we find a Pearson's correlation of $\pearson = 0.51$ and a Spearman's correlation of $\spearman = 0.76$ on open queries, $\pearson = 0.38$ and $\spearman = 0.47$ on closed queries using \modelname{LLaMA-3-8B-instruct}. %
We also evaluate average susceptibility on the corpus level by averaging the susceptibility of queries in each domain in \cref{fig:sus-vs-fisher-corpus}. 
We measure a Pearson's correlation of $\pearson = 0.65$ and a Spearman's correlation of $\spearman = 0.76$ on open queries and $\pearson = 0.51, \spearman = 0.60$ on closed queries using \modelname{LLaMA-3-8B-instruct}.
We take these results as validation that Fisher susceptibility correlates with Monte Carlo susceptibility and is, thus, a good approximation. 
Moreover, the large Pearson's correlation coefficient indicates the relationship is linear.
We refer the readers to \Cref{app:additional-results} for full evaluation results on all sizes of models.\looseness=-1

\begin{figure*}
    \centering
    \includegraphics[width=\textwidth]{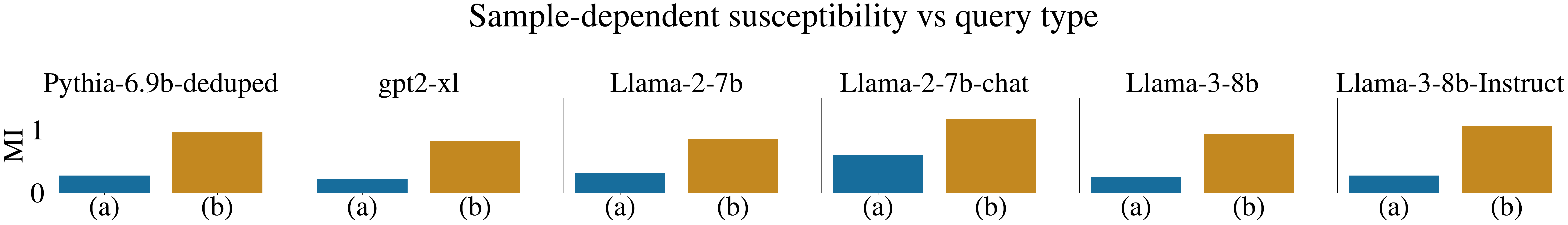}
    \includegraphics[width=\textwidth]{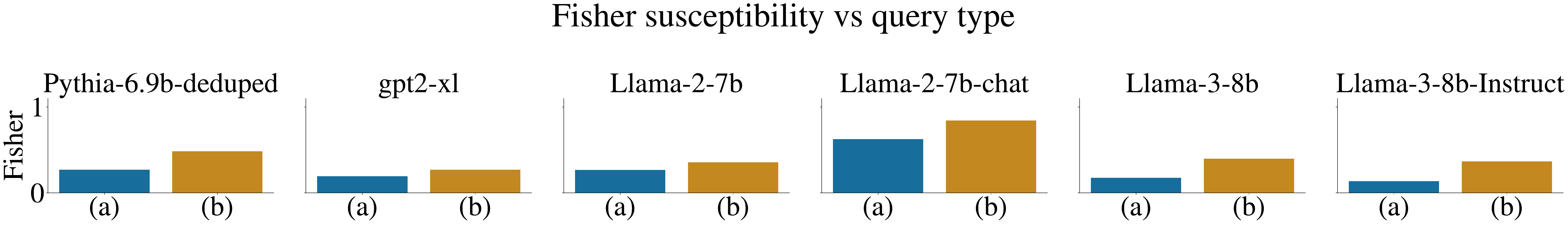}
    \caption{Susceptibility comparisons between (a) question-answering and (b) sentence-completion queries. 
    Consistently, question-answering formats are less susceptible for both susceptibility metrics.
    (Top) Monte Carlo susceptibility. (Bottom) Fisher susceptibility.}
    \label{fig:qa-vs-completion}
    \vspace{-10pt}
\end{figure*}

\paragraph{Runtime Comparison.}
We conduct an empirical analysis on the runtime of Fisher susceptibility and Monte Carlo susceptibility.
We find computing Fisher susceptibility is 70$\times$ faster when the number of samples for Monte Carlo susceptibility is chosen to be 256 and 30$\times$ faster when the number of samples is 128.
Specifically, for \modelname{LLaMA-3-8B} models, evaluating Monte Carlo susceptibility for all 48800 queries on YAGO costs 10 hours while computing Fisher susceptibility only costs 20 minutes. \looseness=-1

\subsection{Factors Affecting Fisher susceptibility}
We now aim to understand what factors could cause a language model to have higher susceptibility.
We investigate three aspects, namely the size and training method of language models, the format of the query, and the type of the entity.\looseness=-1

\paragraph{Models.} 
From \Cref{tab:model-sus-comparison}, we see that the susceptibility of the models of \modelname{Pythia} and \modelname{GPT-2} families do \emph{not} decrease as the number of trainable parameters in the model increases.
This finding indicates that susceptibility can not be offset by the amount of prior knowledge stored in the language model about a query. 
We also see that \modelname{LLaMA} models do not have lower susceptibility compared with the other two model families, despite their better performance on many downstream tasks \cite{touvron2023llama}.
By comparing instruction-tuned models and base models (i.e., \modelname{LLaMA-2-7B} vs. \modelname{LLaMA-2-7B-chat}, \modelname{LLaMA-3-8B} vs. \modelname{LLaMA-3-8B-Instruct}), we find instruction tuning increases the susceptibility of language models.
These comparisons align with the empirical observation that instruction-tuned models are better at integrating the information in the input context and responding to queries.\looseness=-1

\paragraph{Queries.} 
We also investigate whether language models are more susceptible for some particular types of queries.
Similar to the findings of \citet{du2024context}, \Cref{tab:model-sus-comparison} shows that closed queries (e.g., \querytext{Q: Is Here Comes the Sun performed by The Beatles? A:}) are less susceptible than open queries (e.g., \querytext{Q: Who performed Here Comes the Sun? A:}).
In addition, we investigate two forms of open queries: question-answering (e.g., \querytext{Q: Is Here Comes the Sun performed by The Beatles? A:}) and sentence-completion (e.g., \querytext{Here Comes the Sun is performed by}). 
On \modelname{LLaMA-3-8B-Instruct}, we found that queries in question-answering form have an average Monte Carlo susceptibility of $0.27$ and Fisher susceptibility of $0.04$, while the sentence-completion form has $1.05$ and $0.14$.
Both of the susceptibility metrics show that question-answering queries are less easily affected by context than sentence-completion queries.
This finding supports our claim on the comparability of Fisher susceptibility to Monte Carlo susceptibility.
The full results are given in \cref{fig:qa-vs-completion}.\looseness=-1

\paragraph{Entity Familiarity.}
\citet{du2024context} found, across different models, that real entities tend to be less susceptible than fake ones.
However, we find this pattern does not hold for Fisher susceptibility.
We show the susceptibility comparison for 6 models in \cref{fig:real-vs-fake}.
The Monte Carlo susceptibility of real entities is slightly but consistently lower than that of fake entities.
Meanwhile, the Fisher susceptibility of all models remains similar regardless of whether the entity is real or not.
This could suggest a limitation of Fisher susceptibility as an approximation for susceptibility, which could be due to the assumptions made in \cref{eq:mean-and-variance-1}.\looseness=-1

\section{Conclusion} \label{sec:conclusion}
To efficiently measure a language model's susceptibility, we have proposed Fisher susceptibility, which uses the Fisher information of a language model with regard to its input to measure the scale of distributional changes as the input varies.
Through experiments, we find a strong correlation between a language model's Monte Carlo susceptibility and Fisher susceptibility, which we take to validate our approximation.
Compared to methods that require many context samples and language model forward passes, our method is significantly faster.
Our study contributes to the exploration for interpretable and efficient evaluation metrics for language models.\looseness=-1

\section*{Limitations}
One technical limitation of this work is that we compute an approximation of the Fisher information $\fimsymb(\lmembed(\query))$ using \Cref{eq:fisher-approximation} by taking the top $K$ answers.
Second, computing Fisher susceptibility requires automatic differentiation on the language model, which is more memory intensive (by a factor of 2) than simply performing a forward pass. 

\section*{Ethics Statement}

This paper provides a novel language evaluation metric and experimental analysis on publicly available models and data.
Our ultimate goal in this paper is to contribute to the research on language model interpretability and evaluation.
By investigating the robustness of language models to contexts, we aim to contribute to a research effort of developing more reliable, interpretable models.
We foresee no particular ethical concerns and hope this paper contributes to developing tools that can identify and mitigate ethical concerns in the future.

\bibliography{references/custom.bib,references/anthology.bib}

\clearpage
\appendix
\onecolumn

\section{Derivation of \texorpdfstring{\cref{eq:sus-fi}}{Lg}} \label{app:derivation-sus-fi}

\begin{proposition}
\begin{align}
\stickyscoresymb( \query)  \defequals \MI(\rvContext; \rvAnswer \mid \rvQuery = \query) 
 = \frac{1}{2} \expectation{\rvContext  }{\deltaembedding(\context, \query)^\top  \fimsymb( \lmembed(\query)) \deltaembedding(\context, \query)} + \order{|| \deltaembedding(\context, \query) ||^3 }
\end{align}
where $\deltaembedding(\context, \query) \defequals \lmembed(\context \oplus \query) - \lmembed(\query)$.
\end{proposition}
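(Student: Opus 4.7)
The plan is to combine the representation of mutual information as an expectation of KL divergences (from \cref{eq:susceptibility}) with the second-order Taylor expansion of the KL divergence around $\lmembed(\query)$ already given in \cref{eq:fisher-quadratic-form}, and then to push the expectation over $\rvContext$ through the expansion term by term.

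First, I would rewrite the conditional mutual information in its standard KL form
\begin{equation*}
\stickyscoresymb(\query) = \expectation{\rvContext}{\kldiv{p(\rvAnswer \mid \rvContext, \rvQuery = \query)}{p(\rvAnswer \mid \rvQuery = \query)}}.
\end{equation*}
Applying the reparameterization of \cref{sec:reparameterization}, I would identify $p(\rvAnswer \mid \rvContext = \context, \rvQuery = \query)$ with $\funcp(\cdot\,;\,\lmembed(\query) + \deltaembedding(\context, \query))$, and take $\funcp(\cdot\,;\,\lmembed(\query))$---the model's distribution when queried with no context prepended---as the reference that plays the role of $p(\rvAnswer \mid \rvQuery = \query)$ inside the KL. Substituting \cref{eq:fisher-quadratic-form} with perturbation $\deltaembedding(\context, \query)$ into each KL summand and pulling the expectation through then yields
\begin{equation*}
\stickyscoresymb(\query) = \tfrac{1}{2}\,\expectation{\rvContext}{\deltaembedding(\context,\query)^\top \fimsymb(\lmembed(\query))\, \deltaembedding(\context,\query)} + \order{\expectation{\rvContext}{\norm{\deltaembedding(\context,\query)}^3}},
\end{equation*}
which is the stated formula after using linearity of expectation and the monotonicity of $\mathcal{O}(\cdot)$.

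The main obstacle will be justifying the identification of $p(\rvAnswer \mid \rvQuery = \query)$ with $\funcp(\cdot\,;\,\lmembed(\query))$: strictly, the former is the marginal $\expectation{\rvContext}{\funcp(\cdot\,;\,\lmembed(\query) + \deltaembedding(\context,\query))}$, and this marginal differs from $\funcp(\cdot\,;\,\lmembed(\query))$ at first order in $\deltaembedding$ whenever $\expectation{\rvContext}{\deltaembedding(\context,\query)} \neq \mathbf{0}$. I expect the cleanest resolution is to state a zero-mean assumption on $\deltaembedding$ explicitly (which is consistent with the choice $\mathbf{m} = \mathbf{0}$ adopted later in \cref{eq:intrinsic-as-trace}); otherwise a bookkeeping term $\kldiv{p(\rvAnswer \mid \rvQuery = \query)}{\funcp(\cdot\,;\,\lmembed(\query))}$ must be tracked and its own quadratic Taylor contribution shown to be absorbed into the cubic remainder, which I would handle by expanding the marginal using the same score/Hessian expansion of $\log \funcp$ and invoking the fact that scores have mean zero under $\funcp(\cdot\,;\,\lmembed(\query))$.
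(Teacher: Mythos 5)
Your proposal follows essentially the same route as the paper's proof: rewrite the conditional mutual information as an expectation of KL divergences, identify $p(\rvAnswer \mid \rvContext = \context, \rvQuery = \query)$ with $\funcp(\cdot\,;\,\lmembed(\query) + \deltaembedding(\context,\query))$ and $p(\rvAnswer \mid \rvQuery = \query)$ with $\funcp(\cdot\,;\,\lmembed(\query))$, substitute the second-order expansion of \cref{eq:fisher-quadratic-form}, and pull the expectation through into the quadratic term and the cubic remainder. Your closing caveat---that $p(\rvAnswer \mid \rvQuery = \query)$ is strictly the context-marginal rather than $\funcp(\cdot\,;\,\lmembed(\query))$---is a fair observation, but the paper's own proof makes exactly this identification without further justification, so your argument matches it.
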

\begin{proof}
Consider the following manipulation
\begin{subequations}
\begin{align}
 \stickyscoresymb(  \query)   &\defequals \MI(\rvContext; \rvAnswer \mid \rvQuery = \query)  \\ 
 &= \expectation{\rvContext}{\kldiv{p(\rvAnswer \mid \rvContext = \context, \rvQuery = \query)}{p(\rvAnswer \mid \rvQuery = \query)} }\\
 & = \expectation{\rvContext}{\kldiv{\funcp(\cdot \mid \lmembed(\query) + (-\lmembed(\query) + \lmembed(\context \oplus \query))}{\funcp(\cdot \mid \lmembed(\query))}} \\
 & = \frac{1}{2} \expectation{\rvContext  }{(\lmembed(\context) - \lmembed(\query))^\top  \fimsymb( \lmembed(\query)) (\lmembed(\context \oplus \query) - \lmembed(\query))+ \order{||\lmembed(\context \oplus \query) - \lmembed(\query)||^3} } \\
  & = \frac{1}{2} \expectation{\rvContext  }{(\lmembed(\context) - \lmembed(\query))^\top  \fimsymb( \lmembed(\query)) (\lmembed(\context \oplus \query) - \lmembed(\query))} + \expectation{\rvContext  }{\order{||\lmembed(\context \oplus \query) - \lmembed(\query)||^3} }
 \\
 &= \frac{1}{2} \expectation{\rvContext}{(\lmembed(\context) - \lmembed(\query))^\top  \fimsymb( \lmembed(\query)) (\lmembed(\context \oplus \query) - \lmembed(\query))} + \order{\underbrace{\expectation{\rvContext  }{||\lmembed(\context \oplus \query) - \lmembed(\query)||^3}}_{\text{expected approximation error}}} \\
 &= \frac{1}{2} \expectation{\rvContext}{\deltaembedding(\context, \query)^\top  \fimsymb( \lmembed(\query)) \deltaembedding(\context, \query)} + \order{\underbrace{\expectation{\rvContext  }{||\deltaembedding(\context, \query)||^3}}_{\text{expected approximation error}}},
\end{align}
\end{subequations}
which proves the result. 
\end{proof}

\section{Additional Experimental Results} \label{app:additional-results}
We plot the comparison between Fisher susceptibility and Monte Carlo susceptibility in \cref{fig:all-fisher-mi-box} and \cref{fig:all-fisher-mi-reg}.
Across all models, Fisher susceptibility exhibits strong correlation with Monte Carlo susceptibility, which we take to mean that Fisher susceptibility is a good approximation to Monte Carlo susceptibility.

\begin{figure*}
    \centering
    \includegraphics[width=\textwidth]{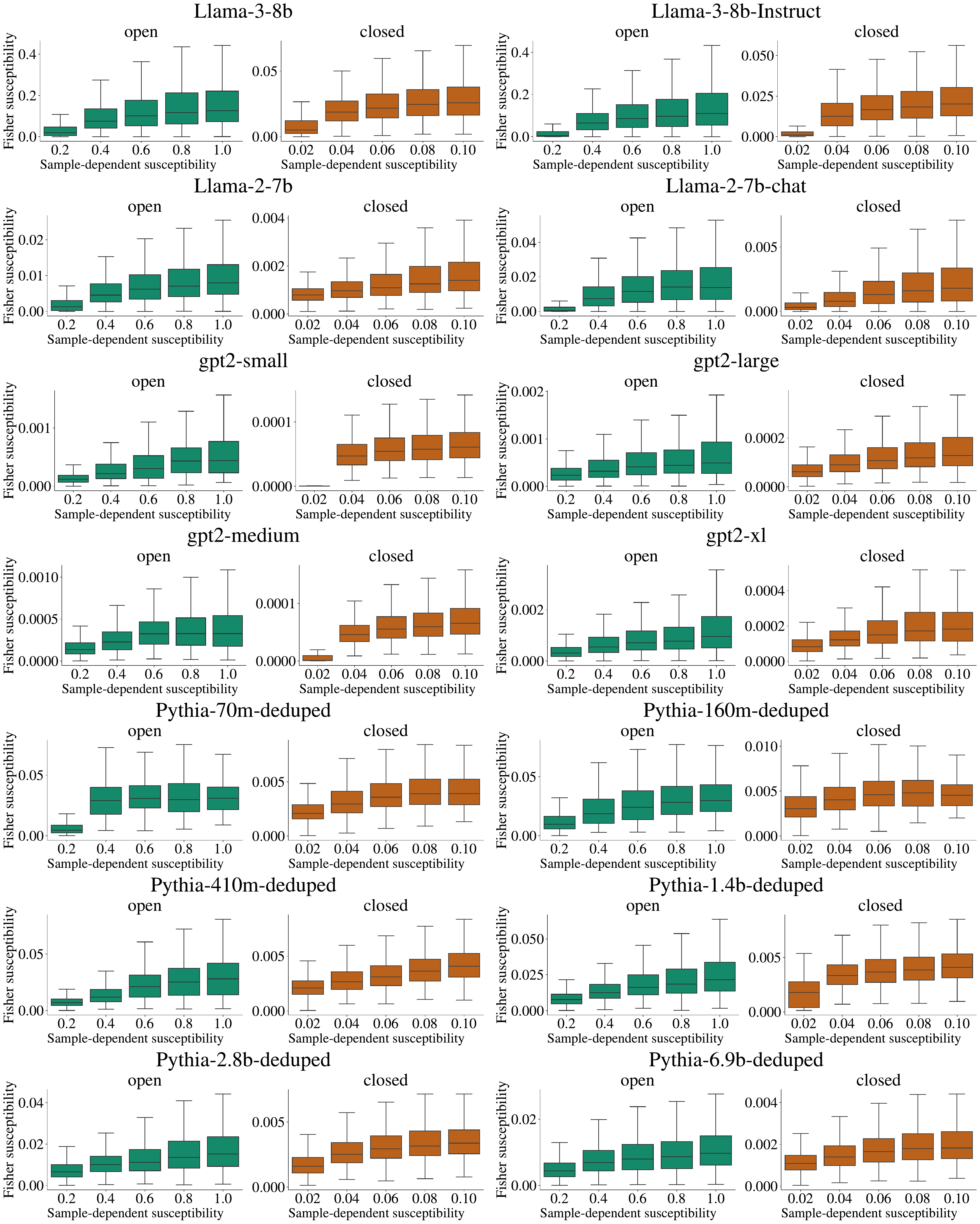}
    \caption{Fisher susceptibility plotted against Monte Carlo susceptibility for different models.}
    \label{fig:all-fisher-mi-box}
\end{figure*}

\begin{figure*}
    \centering
    \includegraphics[width=\textwidth]{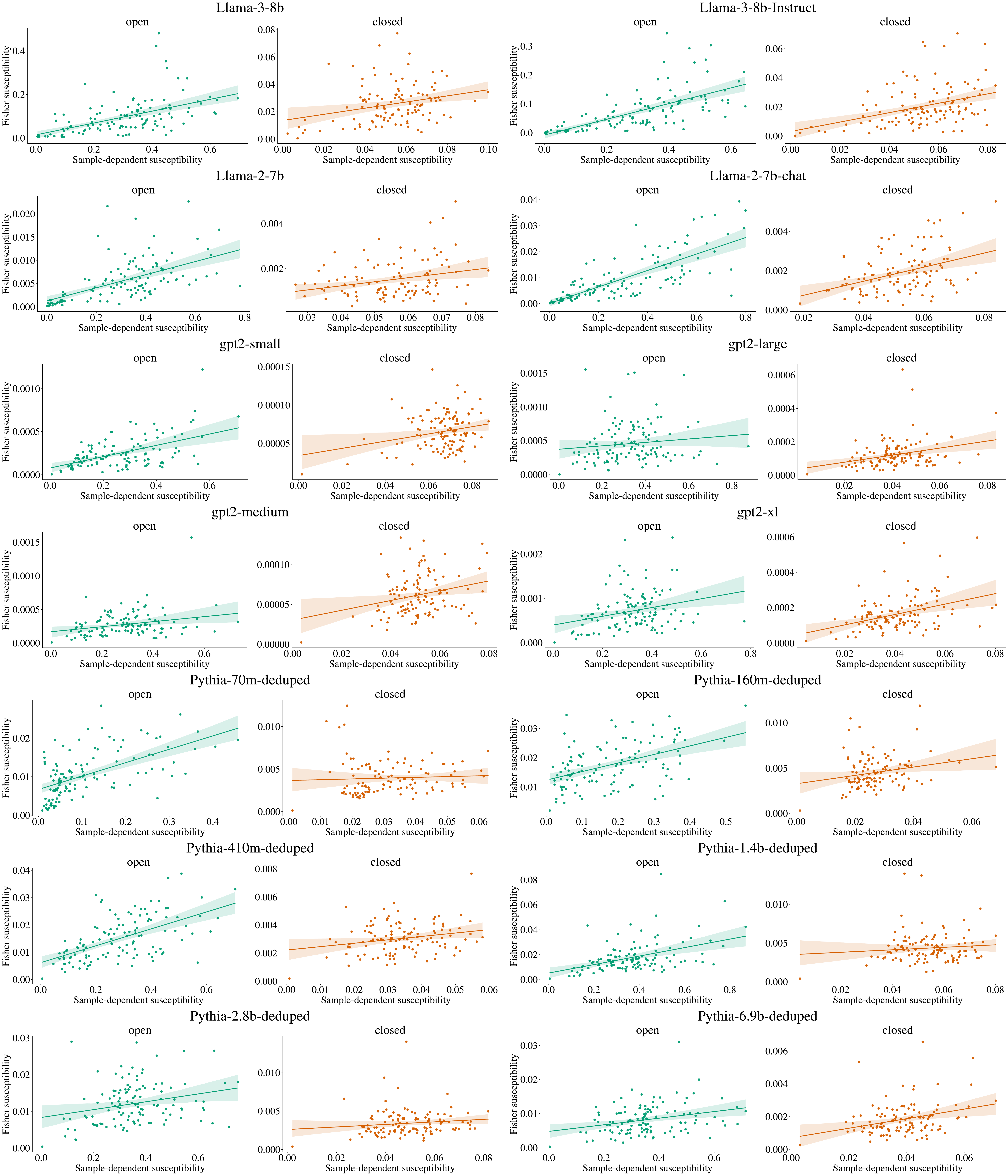}
    \caption{This plot shows the average susceptibility on 122 relation domains in the YAGO dataset for all models. Each point represents the average score of queries on a relation domain. The $x$-coordinate represents Monte Carlo susceptibility and $y$-coordinate represents Fisher susceptibility.}
    \label{fig:all-fisher-mi-reg}
\end{figure*}

\end{document}